\newtheorem{theorem}{Theorem}
\newtheorem{corollary}{Corollary}
\newtheorem{lemma}{Lemma}
\newtheorem{definition}{Definition}
\newtheorem{assumption}{Assumption}
\newtheorem{remark}{Remark}
\title{Super-model ecosystem:\\A domain-adaptation perspective}
\author{Fengxiang He\thanks{The authors are with JD Explore Academy, JD.com Inc., Beijing, 100176, China. Email: \href{mailto:fengxiang.f.he@gmail.com}{fengxiang.f.he@gmail.com} and \href{mailto:dacheng.tao@gmail.com}{dacheng.tao@gmail.com}.} \and Dacheng Tao\footnotemark[1]}
\date{}
\begin{document}

\maketitle

\begin{abstract}
This paper attempts to establish the theoretical foundation for the emerging super-model paradigm via domain adaptation, where one first trains a very large-scale model, {\it i.e.}, super model (or foundation model in some other papers), on a large amount of data and then adapts it to various specific domains. Super-model paradigms help reduce computational and data cost and carbon emission, which is critical to AI industry, especially enormous small and medium-sized enterprises. We model the super-model paradigm as a two-stage diffusion process: (1) in the pre-training stage, the model parameter diffuses from random initials and converges to a steady distribution; and (2) in the fine-tuning stage, the model parameter is transported to another steady distribution. Both training stages can be mathematically modeled by the Uhlenbeck-Ornstein process which converges to two Maxwell-Boltzmann distributions, respectively, each of which characterizes the corresponding convergent model. An $\mathcal O(1/\sqrt{N})$ generalization bound is then established via PAC-Bayesian framework. The theory finds that the generalization error of the fine-tuning stage is dominant in domain adaptation. In addition, our theory suggests that the generalization is determined by a new measure that characterizes the domain discrepancy between the source domain and target domain, based on the covariance matrices and the shift of the converged local minimum.
\end{abstract}

\textbf{Keywords:} generalization, diffusion equation, Uhlenbeck-Ornstein process, Fokker-Plank equation, PAC-Bayesian framework.

\section{Introduction}

Large-scale pretrained models (or foundation models) \citep{han2021pre, chen2021pre}, including GPT-3 \citep{brown2020language} and BERT \citep{devlin2018bert}, enables a new paradigm in machine learning: pre-training a large-scale model on very large-scale datasets and then transferring the learned model to an unseen domain. This paradigm was first introduced in natural language processing and recently to computer vision. It sheds light in a higher-level automation and is establishing a new paradigm with advantages/supremacy. In this paradigm, a super model learns meta knowledge from large amounts of data and reduces learning cost in specific domains. This helps considerably reduce the computational and data cost of applying machine learning in many specific applications. This is thus of significant values to enormous small and medium-sized enterprises. Additionally, super-model paradigm enables better management of the geographic location of machine learning workload and the datacenter infrastructure, which has been shown able to significantly reduce the carbon emission \citep{patterson2021carbon}.

Technically, domain adaptation plays a vital role for the knowledge transferring in the super-model paradigm. Usually, the data in a target domain is much smaller than the one in the source domain. In the light of this, an appropriate understanding to the generalizability of the transferred model on the target domain is of high importance.

In this paper, we prove an upper bound for the generalization error (generalization bound) for domain adaptation algorithms. The generalization error is defined as the difference between the expected risk $\mathcal R$ and the empirical risk $\hat{\mathcal R}$. Intuitively, a larger generalization bound indicates that the generalization error is possibly larger and thus suggests worse generalizability.

We model the super model paradigm as a two-stage diffusion processes. In the first stage, stochastic gradient-based optimizers, usually stochastic gradient-based optimization,  including stochastic gradient descent (SGD) \citep{robbins1951stochastic}, momentum \citep{nesterov1983method, tseng1998incremental}, and Adam \citep{kingma2014adam}, learns a pre-trained model on the source-domain data via empirical risk minimization,
\begin{equation*}
\min_{\theta} \hat{\mathcal R}_S(\theta) = \min_{\theta} \frac{1}{N} \sum_{i=1}^N\ell(h_{\theta} (x_i), y_i),
\end{equation*}
where $\hat{\mathcal R}_S(\theta)$ is the empirical risk of model parameterized by $\theta$ on the training sample $S$, which is defined to be
\begin{equation}
S = \{(x_1, y_1), \ldots, (x_N, y_N) | x_i \in \mathbb R^{d_X}, y_i \in \mathbb R^{d_Y}\},
\end{equation}
where $N$ is the training sample size, and $l$ is the loss function. The convergent parameter initializes the fine-tuning process on the target domain in the second stage. We model the parameter trajectory of SGD by a stochastic process, Uhlenbeck-Ornstein process \citep{uhlenbeck1930theory}, as follows,
\begin{align}
	\Delta \theta(t) = & \theta(t + 1) - \theta(t) = - \eta \hat{g}_{S}(\theta(t)) \nonumber\\
	= & - \eta g(\theta) + \frac{\eta}{\sqrt{|S|}}
	B \Delta W,~ \Delta W \sim \mathcal N(0, I),
\end{align}
where $B$ is positive definite matrix which characterizes the covariance of the gradient noise. 
This can also be smoothed to a diffusion equation, Fokker-Plank equation. Correspondingly, the trajectories can be modeled by the dynamics of the Fokker-Plank equations. 
Further, the steady distributions of the Uhlenbeck-Ornstein equations characterize the distributions of the learned models.

Deep learning can be formulated as solving a non-convex optimization problem: The loss surface of neural networks are usually highly non-convex due to the complexity of neural network architectures. In general, solving a non-convex optimization problem is NP-hard. However, numerous experiments show that deep learning has excellent optimization performance. This mystery is partially addressed by some empirical finding on the local convexity and smoothness of the loss surfaces of deep neural networks. Empirical results show that the loss surface around the convergent local minima is second-order smooth, as shown by \citet{li2018visualizing}. 

This empirical finding inspires us to model the loss surface around the convergent local minimum as a quadratic function. This assumption determines the derivatives and boundary conditions of the Fokker-Plank equation. Moreover, the model parameter is usually initialized by following a Gaussian distribution. Based on them, Fokker-Plank equation has a steady distribution in the form of Maxwell-Boltzmann distribution, which governs the distribution of the learned model by the SGD. During the pre-training stage, SGD converges a Maxwell-Boltzmann distribution around the local minimum given below, 
\begin{equation}
	q_{PT}(\theta) = M_{PT} \exp \left\{ -\frac{1}{2}\theta^{\top} \Sigma_	{PT}^{-1} \theta \right\},
\end{equation}
where $M_{PT}$ is the normalizer and $\Sigma_{PT}$ is the covariance.

This distribution is then used as the initial distribution in the fine-tuning stage. Subsequently, SGD in the fine-tuning stage learns the mapping from the initialization to a new Maxwell-Boltzmann distribution centered at the new local minimum on the loss surface in the target domain as follows,
\begin{equation}
	q_{FT}(\theta) = M_{PT} \exp \left\{ -\frac{1}{2}(\theta - \theta_{FT})^{\top} \Sigma_
	{FT}^{-1} (\theta - \theta_{FT}) \right\},
\end{equation}
where $M_{FT}$ is the normalizer, $\theta_{FT}$ is the distribution shift, and $\Sigma_{FT}$ is the covariance.

Based on the diffusion processes, we then establish PAC-Bayesian generalization bounds for the learned super model on the source domain and the transferred model on the target domain. The PAC-Bayesian framework \citep{mcallester1999pac, mcallester1999some} upper bounds the generalization error of a stochastic algorithm via the distance between the initial distribution and the distribution of the learned hypothesis, usually measured by some information-theoretical distances, such as KL-divergence. Intuitively, the PAC-Bayesian theory suggests that training a very-large model from a no-knowledge prior, such as Gaussian distribution and uniform distribution, needs a very large amount of data to secure the generalizability; and if the initialization is near the distribution of the learned hypothesis, the needed sample complexity can be relatively much smaller. However, a high-quality prior is not accessible in practice. This significantly limits the model size, particularly in low-resource scenarios. This renders the key motivation of the super-model paradigm: (1) training a super model on a very large-scale dataset, in order to learn a high-quality model from the no-knowledge prior; and (2) using the learned super model as a high-quality prior in the down-stream application, in order to reduce the needed training data and supports larger model size.

In this paper, the generalization bound in pre-training is established based on the KL-divergence between the Maxwell-Boltzmann distribution in pre-training as below,
	\begin{align}
	\label{eq:thm:PAC-Bayesian_SGD-PT}
		& R(Q_{PT}) \le \hat R(Q_{PT}) \nonumber\\
		+ & \sqrt{\frac{D(Q_{PT}, P) + 2 \log \left( \frac{1}{\delta} \right) + 2 \log N_{PT} + 4}{4N_{PT} - 2}},
	\end{align}
	where
\begin{align*}
D(Q_{PT}, P) 
= \log \left({\det(\Sigma_{PT})} \right) + \text{tr}(\Sigma_{PT} - I),
\end{align*}
and $R(Q_{PT})$ is the expected risk, $\hat R(Q_{PT})$ is the empirical risk, $\Sigma_{PT}$ is the covariance of the distribution of the learned hypothesis, and $N_{PT}$ is the training sample size in the pre-training.

Meanwhile, the generalization bound in pre-training is established based on the KL-divergence between the two Maxwell-Boltzmann distributions in pre-training and fine-tuning as follows,
	\begin{align}
	\label{eq:thm:PAC-Bayesian_SGD-FT}
		& R(Q_{FT}) \le \hat R(Q_{FT}) \nonumber\\
		+ & \sqrt{\frac{D(Q_{FT}, Q_{PT}) + 2 \log \left( \frac{1}{\delta} \right) + 2 \log N_{FT} + 4}{4N_{FT} - 2}},
	\end{align}
	where
\begin{align*}
& D(Q_{FT}, Q_{PT}) \nonumber\\
= & \log \left({\det(\Sigma_{PT}^{-1}\Sigma_{FT})} \right) + \text{tr}(\Sigma_{PT}^{-1}\Sigma_{FT} - I) + \theta_{FT}^\top \Sigma_{PT}^{-1}\theta_{FT},
\end{align*}
and $R(Q_{FT})$ is the expected risk, $\hat R(Q_{FT})$ is the empirical risk, $\Sigma_{FT}$ is the covariance of the distribution of the learned hypothesis, $\theta_{FT}$ is the shift of the distribution center, and $N_{FT}$ is the training sample size in the fine-tuning.

We further define two new notions to measure the domain discrepancy as follows,
\begin{align*}
& D(Q_{FT}, Q_{PT}) \nonumber\\
= & \log \left({\det(\Sigma_{PT}^{-1}\Sigma_{FT})} \right) + \text{tr}(\Sigma_{PT}^{-1}\Sigma_{FT} - I) + \theta_{FT}^\top \Sigma_{PT}^{-1}\theta_{FT},
\end{align*}
and
\begin{align*}
& \tilde D(Q_{FT}, Q_{PT}) \nonumber\\
= & \log(\text{tr}(\Sigma_{PT}^{-1}\Sigma_{FT})) + \text{tr}(\Sigma_{PT}^{-1}\Sigma_{FT}) + \theta_{FT}^\top \Sigma_{PT}^{-1}\theta_{FT} \nonumber\\
& + d \log d - d,
\end{align*}
where $d$ is the parameter size. These two notions measure the magnitude of the domains shifts based on the learned hypotheses on the source domains and target domains.

Our theory have the following implications:
\begin{itemize}
\item
The very large-scale datasets employed in the pre-training stage helps secure obtaining a high-quality model from a no-knowledge prior. The learned model carries the knowledge learned from the training data in the pre-training stage. The distribution of the pre-trained model severs as a high-quality initialization in the down-stream fine-tuning stage.

\item
Comparing the generalization bounds in the pre-training stage and fine-tuning stage, we show that the generalization error of the fine-tuning stage is dominant in the super-model paradigm. This is because the dominantly large size of the training data in the pre-training stage. This finding supports the feasibility and efficiency of the super-model paradigm.

\item
Our generalization bound suggests that the generalization on the target domain is determined by the magnitude of the domain shifts. Generally, larger domain shifts lead to worse generalization on the target domain. This supports the heuristic in practice that the performance on the target domain is limited by the domain shifts.
\end{itemize}

It is worth noting that the super-model paradigm also supports model compression in the model deployment, including pruning, quantization, model distillation, etc. The influence of model compression methods can be directly plugged in our theory.


\section{Background}
\label{sec:review}

This section reviews the related work, including super model, domain adaptation, generalization, and deep learning theory.

\textbf{Domain adaptation.} Domain adaptation algorithms transfer knowledge from one domain to another. It enables the super model paradigm. Domain adaptation has three main streams:

(1) Discrepancy-based domain adaptation modifies the loss to narrow the discrepancy between the features from the source domain $\mathcal{D}^s$ and the ones from the target domain $\mathcal{D}^t$. \citet{tzeng2014deep} introduce a fully-connected adaptation layer into CNN for learning the representation of the kernel $\phi(\cdot)$ in order to minimize the maximum mean discrepancy (MMD) between the features from different domains:
\begin{align*}
& \operatorname{MMD}\left(X_{S}, X_{T}\right) \nonumber\\
=& \left\|\frac{1}{\left|X_{S}\right|} \sum_{x_{s} \in X_{S}} \phi\left(x_{s}\right)-\frac{1}{\left|X_{T}\right|} \sum_{x_{t} \in X_{T}} \phi\left(x_{t}\right)\right\|.
\end{align*}
\citet{long2015learning} employ multiple adaptation layers. \citet{long2015learning, long2016unsupervised} introduce residual blocks into the classifiers of source domain. \citet{long2017deep} further consider the discrepancy of the joint distribution $P(X,Y)$ rather than the marginal distribution $P(X)$;

(2) Adversarial-based domain adaptation maps the source domain $\mathcal{D}^s$ and the target domain $\mathcal{D}^t$ to a general space, inspired by generative adversarial networks (GANs): If a classifier hardly separates examples of source domain and those from target domain, the feature extractor has narrowed the two domains. \citet{ganin2015unsupervised}  propose gradient reversal layers that reverse the gradients generated by the domain classifier during backpropagation. \citet{zhang2018collaborative} argue that the features of the bottom layers contain more domain information, 
while those of the top layers contain less domain information. 
They further employ collaborative learning to learn domain informative features in the bottom layers, and adapt adversarial learning to learn domain uninformative features in the top layers; and

(3) Reconstruction-based domain adaptation reconstructs the features extracted from the source domain $\mathcal{D}^s$ to the target domain $\mathcal{D}^t$. \citet{ghifary2016deep} reconstruct examples from the target domain via features learned from source domain classification task. \citet{bousmalis2016domain} reconstruct the inputs via both private representation and shared representation of both source and target domains.

\textbf{Generalization.} Good generalization guarantees that an algorithm learns the underlying patterns in training data rather than just memorize the data. In this way, good generalization abilities provide confidence that the models trained on existing data can be applied to similar but unseen scenarios. Three major approaches in analyzing the generalizability are seen in the literature: (1) generalization bounds based on the hypothesis complexity, including VC dimension \citep{blumer1989learnability, vapnik2006estimation}, Rademacher complexity \citep{koltchinskii2000rademacher, koltchinskii2001rademacher, bartlett2002rademacher}, and covering number \citep{dudley1967sizes, haussler1995sphere}. The results are usually obtained via concentration inequalities. They also suggest controlling the model size to secure the generalizability, which is no longer valid in deep learning; (2) generalization bounds based on the algorithmic stability \citep{rogers1978finite, bousquet2002stability, xu2011sparse}. The results in this stream follow the motivation that learning algorithms robust to small disturbances in input data usually have good generalizability; and (3) generalization bounds in the PAC-Bayes framework \citep{mcallester1999pac, mcallester1999some}. The results are obtained based on information-theoretical versions of concentration inequalities.

\textbf{Deep learning theory.} Deep learning has been deployed successfully in many real-world scenarios. However, the theoretical foundations of deep learning are still elusive. For example, there is no explanation for how deep learning algorithms work, why they can succeed, when they would fail, and whether they would hurt society. Such deficiency in explainability questions the transparency and accountability of deep learning, and further undermines our confidence of deploying deep learning in security-critical application domains, such as medical diagnosis \citep{kulikowski1980artificial, silver2016mastering} and drug discovery \citep{chen2018rise}. Many works have emerged to establish the theoretical foundations of deep learning via VC dimension \citep{harvey2017nearly}, Rademacher complexity \citep{golowich2017size, bartlett2017spectrally}, covering number \citep{bartlett2017spectrally}, Fisher-Rao norm \citep{liang2019fisher, tu2020understanding}, PAC-Bayesian framework \citep{neyshabur2017pac}, algorithmic stability \citep{hardt2016train, kuzborskij2018data, verma2019stability}, and the dynamics of stochastic gradient descent or its variants \citep{mandt2017stochastic, mou2018generalization, he2019control}. Please see more related works in surveys \citep{e2020towards, he2020recent, poggio2020theoretical}. This work is committed to establishing theoretical foundations of privacy, generalization, adversarial attack in deep learning, all of which have profound importance in enhancing the explainability, transparency, and accountability of deep models.

\textbf{Generalization of SGD.} Some generalization bounds for algorithms trained by SGD are proposed. \citet{mou2017generalization} analyze the generalization of stochastic gradient Langevin dynamics (SGLD), and prove an $\mathcal O\left(1/N\right)$ upper bound and an $\mathcal O\left(1/\sqrt{N}\right)$ upper bound for the generalization error, respectively via algorithmic stability and PAC-Bayesian theory. \citet{pensia2018generalization} analyze the generalizability of noisy and iterative machine learning algorithms. A generalization bound is then proved given the mutual information between the output hypothesis and the input data. It also proved generalization bounds for SGLD as examples. \citet{chen2018stability} prove that the convergence and stability for iterative machine learning algorithms have a trade-off under both convex smooth assumption and strong convex smooth assumption. Under the same assumptions, \citet{chen2018stability} prove an $\mathcal O\left(1/N\right)$ generalization bound for SGD. \citet{liu2017algorithmic} prove an $\mathcal O (1/N)$ generalization bound for SGD when the loss function is Lipschitz continuous and smooth. \citet{london2017pac} prove a generalization bound for SGD based on the KL divergence between the prior $P$ and the posterior $Q$ under the PAC-Bayes framework. \citet{he2019control} present a PAC-Bayes generalization bound for SGD based on stochastic differential equations. In the work of He {\it et al.}, the gradient noise is modeled by a Gaussian distribution. \citet{meng2020dynamic} extend the gradient noise to be state-dependent. \citet{cheng2020stochastic} extend the gradient noise to be Levy process.

\section{Notations and preliminaries}
\label{sec:preliminaries}

Suppose the training dataset is $S = \{(x_1, y_1), \ldots, (x_N, y_N) | x_i \in \mathbb R^{d_X}, y_i \in \mathbb R^{d_Y}, i = 1, \ldots, N\}$, where $d_X$ is the dimension of the feature $X$ and $d_Y$ is the dimension of the label $Y$. Suppose $x_i$ and $y_i$ are independent and identically distributed (i.i.d.) observation of variables $X \in \mathcal X$ and $Y \in \mathcal Y$, respectively. We also rewrite $z_i = (x_i, y_i)$, which is an i.i.d. observation of random variable $Z = (X, Y) \in \mathcal Z$. Denote the generating distribution of $Z$ is $\mathcal D$.

Formally, machine learning algorithms are designed to select the hypothesis function $F_{\theta}$ with the lowest expected risk $\mathcal R$ under the loss function $l$ from a hypothesis class $\{ F_{\theta} | \theta \in \Theta \subset \mathbb R^{d} \}$, where $\theta$ is the parameter of the hypothesis and $d$ is the dimension of the parameter $\theta$. For many stochastic algorithms, such as SGD, we usually use a distribution to express the output parameter. Suppose the parameter follows a distribution $Q$, the expected risks respectively in terms of $\theta$ and $Q$ are defined as:
\begin{gather}
\label{eq:parametric_expected_risk}
	\mathcal R(\theta) =\mathbb E_{(X, Y) \sim \mathcal D} l(F_{\theta}(X), Y),\\
	\mathcal R(Q) = \mathbb E_{\theta \sim Q} \mathbb E_{(X, Y) \sim \mathcal D} l(F_{\theta}(X), Y).
\end{gather}
However, the expected risk $\mathcal R$ is not available from the data, since we do not know the formulation of latent distribution $\mathcal D$ of data. Practically, we use the empirical risk $\hat{\mathcal R}$ to estimate the expected risk $\mathcal R$, which is defined as:
\begin{gather}
	\hat{\mathcal R}(\theta) = \frac{1}{|T|} \sum_{i = 1}^{|T|} l(F_{\theta}(X_{i}), Y_{i}),\\
	\hat{\mathcal R}(Q) = \mathbb E_{\theta \sim Q} \left[ \frac{1}{|T|} \sum_{i = 1}^{|T|} l(F_{\theta}(X_{i}), Y_{i}) \right],
\end{gather}
where all $(X_{i}, Y_{i})$ constitute the training sample $T$.

Learning algorithms usually solve the following empirical risk minimization (ERM) problem to approach the optimal hypothesis,
\begin{equation*}
\min_{\theta} \hat{\mathcal R}_S(\theta) = \min_{\theta} \frac{1}{N} \sum_{i=1}^N\ell(h_{\theta} (x_i), y_i).
\end{equation*}
We usually employ stochastic gradient-based optimizers for ERM in deep learning. Popular options of stochastic gradient-based optimizers include stochastic gradient descent (SGD) \citep{robbins1951stochastic}, momentum \citep{nesterov1983method, tseng1998incremental}, and Adam \citep{kingma2014adam}. For the brevity, we analyze SGD in this paper. The analysis for other stochastic gradient-based optimizers is similar.

Suppose $\mathcal B$ is a mini batch randomly drawn from the training sample set $S$. Then, the stochastic gradient on $\mathcal B$ is as follows,
\begin{equation*}
\hat g^\mathrm{ERM} (\theta) = \frac{1}{|\mathcal{B}|} \sum_{(x_i, y_i) \in \mathcal B} \nabla_{\theta}\ell(h_{\theta} (x_i), y_i).
\end{equation*}
In the $t$-th iteration, the weight is updated as follows,
\begin{equation*}
\theta^\mathrm{ERM}_{t+1} = \theta^\mathrm{ERM}_t - \eta_t \hat g^\mathrm{ERM} (\theta^\mathrm{ERM}_t),
\end{equation*}
where $\theta^\mathrm{ERM}_t$ is the weight vector in the $t$-th iteration and $\eta_t$ is the corresponding learning rate.

Meanwhile, adversarial training employs SGD to solve the following minimax problem,
\begin{equation}
\label{eq:adversarial_risk}
\min_{\theta} \hat{\mathcal R}^A_S(\theta) = \min_{\theta} \frac{1}{N} \sum_{i=1}^N \max_{\Vert x_{i}^\prime - x_{i} \Vert \leq \rho}\ell(h_{\theta} (x_i^\prime), y_i),
\end{equation}
where $\rho$ is the radius of the ball centered at the example $(x_i, y_i)$. Here, we call $\hat{\mathcal R}^A_S(\theta)$ adversarial empirical risk. Correspondingly, the stochastic gradient on a mini batch $\mathcal B$ and the weight update are calculated as below,
\begin{gather}
\label{eq:adversarial_gradient}
\hat g^A (\theta) = \frac{1}{|\mathcal B|} \sum_{(x_i, y_i) \in \mathcal B} \nabla_{\theta} \max_{\Vert x_{i}^\prime - x_{i} \Vert \leq \rho}\ell(h_{\theta} (x_i^\prime), y_i), \nonumber\\
\theta^A_{t + 1} = \theta^A_t - \eta_t \hat g^A (\theta^A_t).
\end{gather}

\begin{definition}[KL Divergence; cf. \citet{kullback1951information}]
        \label{def:KL_divergence}
        Suppose two distributions $P$ and $Q$ are defined on the same support. Then the KL divergence between $P$ and $Q$ is defined as
\begin{equation*}
    D_{KL}(P\Vert Q) =\mathbb E_P \left(\log\frac{\text d P}{\text d Q}\right).
\end{equation*}
\end{definition}

To avoid technicalities, the measurability/integrability issues are ignored throughout this paper. Moreover, Fubini's theorem is assumed to be applicable for any integration with respect to multiple variables, that the order of integrations is exchangeable. Also, we assume the stable (stationary) solutions of all stochastic differential equations involved exit and are unique.

\section{Super-model paradigm}
\label{sec:super-model-ecosystem}

A supreme industrial paradigm has been emerging that (1) pre-training a large-scale model on large amounts of multi-modality data, such as GPT-3 \citep{brown2020language} and BERT \citep{devlin2018bert}; and (2) fine-tuning the obtained model on specific smaller domain where data size is relatively difficult to access. In this paper, we name it as {\it super-model paradigm.} Super-model paradigm enables efficient and effective knowledge discovery in low-resource application scenarios, including few-shot learning \citep{snell2017prototypical, sung2018learning} and zero-shot learning \citep{romera2015embarrassingly}. A key cornerstone technology wherein is domain adaptation. This section describes this paradigm.

\textbf{Large-scale pre-trained models.} Recent advances are seen mainly in natural language processing (NLP), particularly after the appearance of transformer \citep{vaswani2017attention}. ELMo \citep{peters2018deep} finds the word embedding in NLP is not invariant in different application domains, but considerably changes with context. Based on this observation, ELMo  pre-trains a large-scale bidirectional LSTM on a large text corpus to generate word vectors by fine-tuning. BERT \citep{devlin2018bert} employs the transformer encoder for detecting bidirectional information in the context. Meanwhile, \citet{liu2018generating} employs the transformer decoder to word embedding with fine-tuning, in order to realize wider attention. GPT \citep{radford2018improving} also employs the transformer decoder but is fine-tuned on each specific task for better performance. Extended from GPT, GPT-2 \citep{radford2019language} and GPT-3 \citep{brown2020language} construct huge models in order to realize zero-shot learning. The comparison between these ``super models'' is presented in the following table.

\begin{table}[ht]
    \centering
    \caption{List of Super Models (SMs)}
    \begin{tabular}{cccc}
    \toprule
    SM & Architecture & Params \\
    \midrule
    ELMo & BiLSTM & - \\
    BERT & Transformer Encoder  & 110M $\sim$ 340M \\
    GPT & Transformer Deconder &  117M \\
    GPT-2 & Transformer Deconder &  117M $\sim$ 1,542M \\
    GPT-3 & Transformer Deconder &  175M \\
    \bottomrule
    \end{tabular}
    \label{tab:plms}
\end{table}
\textbf{Pre-training stage.} The first step of the super-model paradigm is pre-training a super model on large-scale data, sometimes of multi-modality. 
The learned model in this stage is of high quality that the approximation and generalization of the output hypothesis are usually excellent, which suggests that the learned model has stored rich general knowledge in the learned model. This makes it possible to apply the learned model for the smaller specific application domains. 

\textbf{Fine-tuning stage.} The learned model is then fine-tuned on the target domain, usually a smaller specific domain. The stored general knowledge is thereby transferred to the target domain. In this way, super-model paradigm reduces considerable sources of knowledge discovery in the target domain. 

\textbf{Theoretical advantages.} According to the PAC-Bayesian theory, the generalizability of the learned model is determined by the distance between the posterior and the prior. As we will show in the next two sections, the very large-scale training data in the pre-training stage secures learning a high-quality model with no-knowledge prior. The learned knowledge is of high value but consumed enormous resources which is not accessible for many potential machine learning users, particularly small and medium-sized enterprises. In the super-model paradigm, the high-quality model learned in the pre-training stage is employed as the initialization in the fine-tuning stage. In this way, we significantly reduce the needed sample complexity in the fine-tuning stage.

\textbf{Industrial values.} Machine learning has been thriving in a wide range of areas. However, the industrial applications are still limited. This is partially caused by the high cost of computing facilities and data annotations. The paradigm based on super models significantly reduce the cost of machine learning applications. This is particularly important for small and medium-size enterprises. 

\textbf{Climate value.} Super-model paradigm enables recycling discovered general knowledge in enormous application domains. This would also help significantly reduce the carbon emission. Meanwhile, the super-model paradigm centralizes the modeling training process which can help manage the geographic location and the datacenter infrastructure in order to reduce the carbon emission, as a recurrent work suggested \citep{patterson2021carbon}:
\begin{itemize}
\item
Geographic location of machine learning workload scheduling can result carbon emission vary around five times to ten times, even when the country and the organization remain invariant.
\item
Cloud data centers can be around 1.4-2X more energy-efficient. Meanwhile, machine learning-oriented accelerators can be ~2-5X more effective.
\end{itemize}
Super-model paradigm can thus reduce the carbon print of machine learning application and further contribute in slowing down the climate crisis.

\section{Diffusion processes in super-model paradigm}
\label{sec:diffusion}

We consider a diffusion process-based model that serves an envelope for domain adaptation methods. Two diffusion processes are designed for modeling the pre-training and fine-tuning stages, respectively. The knowledge transition can then be modeled via the transition of diffusion processes.

\subsection{Diffusion process in pre-training}

In pre-training, SGD explores on the loss surface for a decent local minimum. Compared with gradient descent, SGD introduces gradient noise into the gradient and then the weight. The noise plays as an implicit regularizer that controls the hypothesis complexity of the learned model. In this section, we employ a stochastic differential equation to characterize the trajectory of SGD.

We assume that the loss function in the local region around the minimum is convex and second-order differentiable, as shown in the following assumption.

\begin{assumption}
Suppose that the empirical risk $\mathcal R(\theta)$ around the optimum as the following equation,
\begin{equation}
\label{eq:local_property}
	\mathcal R(\theta) = \frac{1}{2} \theta^{\top} A_{PT} \theta,
\end{equation}
where $A_{PT}$ is the Hessian matrix around the minimum and is a (semi) positive-definite matrix.
\end{assumption}


\begin{remark}
This assumption implicitly assumes that the converged local minimum is at the zero point. This would not influence the generality under translational motion. Specifically, suppose the converged local minimum is at $\theta'$. We may perform a translational motion to the neural network to move the converged local minimum to zero.
\end{remark}

\begin{remark}
The Hessian matrix $A_{PT}$ of the loss surface characterizes the local geometry around the converged local minimum. Its determinant characterizes the flatness/sharpness of the loss function around the local minimum \citep{keskar2016large, goyal2017accurate}.
\end{remark}

\begin{remark}
The covariance matrix $C_{PT}$ characterizes the fluctuation introduced by the mini bathes into the gradient estimation. A recent intuition for the advantage of SGD is that it introduces noise into the gradient, so that it can jump out of bad local minima. 
\end{remark}

The loss $l_{n}(\theta)$ and gradient $\hat{\mathcal R}(\theta)$ calculated on a mini-batch are un-biased estimators of the empirical risk $\hat{\mathcal R}$ and the full gradient $\nabla_{\theta} \mathcal R(\theta)$, as follows,
\begin{gather}
	\mathbb E \left[ l_{n}(\theta) \right] = \mathbb E \left[ \hat{\mathcal R}(\theta) \right] = \mathcal R(\theta),\\
	 \mathbb E \left[ \nabla_{\theta} l_{n}(\theta) \right] = \mathbb E \left[ \hat{g}_{S}(\theta) \right] = g(\theta) = \nabla_{\theta} \mathcal R(\theta),
\end{gather}
where the expectations are in terms of the corresponding examples $(X, Y)$.

The fluctuations introduced by the mini batches are modeled by Gauss distributions centered at $g(\theta) = \nabla_{\theta} R(\theta)$. Specifically, we assume that
\begin{equation}
\label{eq:single_gradient_noise}
	\nabla_{\theta} l_{n}(\theta) 
	\sim \mathcal N(g(\theta), C),
\end{equation}
where $C$ is the covariance matrix and is a constant matrix for all $\theta$.
This Gaussian assumption is also employed in by \citet{e2017proposal} and \citet{mandt2017stochastic}.
Therefore, we further have the following estimation,
	\begin{align}
	\label{eq:stochastic_gradient_noise}
		\hat g_{S}(\theta) = \frac{1}{|S|} \sum_{n \in S} \nabla_{\theta} l_{n}(\theta) 
		\sim \mathcal N\left(g(\theta), \frac{1}{|S|}C\right).
	\end{align}
	
SGD uses the stochastic gradient $\hat{g}_{S} (\theta)$ to iteratively update the parameter $\theta$ in order to minimize the function $\mathcal R(\theta)$:
\begin{align}
\label{eq:OU_process}
	\Delta \theta(t) = & \theta(t + 1) - \theta(t) 
	= - \eta \hat{g}_{S}(\theta(t)) 
	= - \eta g(\theta) + \frac{\eta}{\sqrt{|S|}}
	B \Delta W,
	\end{align}
	and
	\begin{equation*}
		\Delta W \sim \mathcal N(0, I),
	\end{equation*}
	where $B$ is positive definite matrix which characterizes the covariance of the gradient noise. We define that 
	\begin{equation*}
	C = B^\top B.
	\end{equation*}
In this paper, we consider the case that the batch size $|S|$ and learning rate $\eta$ are constant.

Combining eqs. (\ref{eq:local_property}) and (\ref{eq:OU_process}), we have the following analytic form of the stationary distribution \citep{gardiner1985handbook}:
\begin{equation}
\label{eq:stationary_distribution}
	q_{PT}(\theta) = M_{PT} \exp \left\{ -\frac{1}{2}\theta^{\top} \Sigma_
	{PT}^{-1} \theta \right\},
\end{equation}
where $M_{PT}$ is the normalizer and 
	\begin{gather*}
	\Sigma_{PT} A_{PT} + A_{PT} \Sigma_{PT} = \frac{\eta_{PT}}{|S_{PT}|} C_{PT},
	\end{gather*}
	$\eta_{PT}$, $|S_{PT}|$, and $C_{PT}$ are the learning rate, batch size, and the covariance matrix in the pre-training stage, respectively.


\begin{remark}
In this section, we show that the learned hypothesis is drawn from the steady distribution of a Fokker-Plank equation, which is a Gibs-Boltzmann distribution centered around the zero point.
\end{remark}

\subsection{Knowledge transition in fine-tuning}

SGD in the fine-tuning stage can also be characterized by the Uhlenbeck-Ornstein equation (eq. \ref{eq:OU_process}), while the initial condition is different. The fine-tuning stage is initialized by the steady distribution of the pre-training stage $\Sigma_{PT}$. Similarly, the SGD converges to another steady distribution $\Sigma_{FT}$.
In this way, we model the domain adaptation as a two-stage diffusion process. The second-stage diffusion process characterizes the knowledge transition between the two domains.

We assume that the loss function in the local region around the minimum is convex and $2$-order differentiable, as shown in the following assumption.
\begin{assumption}
Suppose that the empirical risk $\mathcal R(\theta)$ around the optimum as the following equation,
\begin{equation}
\label{eq:local_property_FT}
	\mathcal R(\theta) = \frac{1}{2} (\theta - \theta_{FT})^{\top} A_{FT} (\theta - \theta_{FT}),
\end{equation}
where $A$ is the Hessian matrix around the minimum and is a (semi) positive-definite matrix.
\end{assumption}

Recall that we assumed that the converged local minimum in the pre-training stage is at the zero point. In the fine-tuning stage, the converged local minimum cannot be assumed at the same point in general. Thus, a shift term $\theta_{FT}$ is introduced to characterize the the shift of the converged local minimum.

Similarly, combining eqs. (\ref{eq:local_property_FT}) and (\ref{eq:OU_process}), we have the following analytic form of the stationary distribution:
\begin{equation}
\label{eq:stationary_distribution_FT}
	q_{FT}(\theta) = M_{PT} \exp \left\{ -\frac{1}{2}(\theta - \theta_{FT})^{\top} \Sigma_
	{FT}^{-1} (\theta - \theta_{FT}) \right\},
\end{equation}
where $M_{FT}$ is the normalizer and 
	\begin{gather*}
	\Sigma_{FT} A_{FT} + A_{FT} \Sigma_{FT} = \frac{\eta_{PT}}{|S_{PT}|} C_{PT},
	\end{gather*}
	$\eta_{FT}$, $|S_{FT}|$, and $C_{FT}$ are the learning rate, batch size, and the covariance matrix in the fine-tuning stage.


Recall that the converged local minimizer in the pre-training stage is drawn from a Gibs-Boltzmann distribution centered at the zero point. This is inherited from the assumption that the local minimum is around the zero point. However, in the fine-tuning stage, the converged local minimum has a shift $\theta_{FT}$ from the zero point. This leads to a shift $\theta_{FT}$ of the distribution of the learned hypothesis.

\section{Generalization analysis of super-model paradigm}
\label{sec:generalization}

The knowledge transition characterized by the diffusion process in the fine-tuning. In this paper, we employ the PAC-Bayesian theory to analyze the generalizability of domain adaptation.

\subsection{PAC-Bayesian framework}

PAC-Bayesian theory corporates the PAC theory and Bayesian statistics \citep{mcallester1999pac, mcallester1999some}. It presents a generalization bound for a stochastic algorithm based on the distance between the learned hypothesis and the prior measured by the KL divergence. The PAC-Bayesian bound characterizes the trade-off between minimising the empirical risk and exploring further areas of the hypothesis space from the initial.



\begin{lemma}[see \citet{mcallester1999pac}, Theorem 1]
\label{lemma:PAC-Bayesian}
	For any positive real $\delta \in (0, 1)$, with probability at least $1 - \delta$ over a sample of size $N$, we have the following inequality for all distributions $Q$:
	\begin{align}
	\label{eq:PAC-Bayesian}
		\mathcal R(Q) \le & \hat{\mathcal R}(Q) + \sqrt{\frac{\mathcal D(Q || P) + \log \frac{1}{\delta} + \log N + 2}{2N - 1}},
	\end{align}
	where $\mathcal D(Q || P)$ is the KL divergence between the distributions $Q$ and $P$ and is defined as,
\begin{equation}
\label{eq:def_KL_divergence}
	\mathcal D(Q || P) = \mathbb E_{\theta \sim Q} \left( \log \frac{Q(\theta)}{P(\theta)} \right).
\end{equation}
\end{lemma}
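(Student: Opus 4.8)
The plan is to prove the bound for all posteriors $Q$ simultaneously by reducing the uniform‑over‑$Q$ statement to a single concentration inequality through the variational (change‑of‑measure) characterization of the KL divergence. Write $\Delta(\theta) = \mathcal R(\theta) - \hat{\mathcal R}(\theta)$ for the per‑hypothesis generalization gap, and assume, as in the cited theorem, that the loss takes values in $[0,1]$. Since $\mathcal R(Q) - \hat{\mathcal R}(Q) = \mathbb E_{\theta \sim Q}\Delta(\theta)$, Jensen's inequality (convexity of the square) gives $(\mathcal R(Q)-\hat{\mathcal R}(Q))^2 \le \mathbb E_{\theta \sim Q}\Delta(\theta)^2$, replacing the quantity of interest by the more tractable second moment $\mathbb E_{\theta \sim Q}\Delta(\theta)^2$. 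The target inequality then follows once I control $(2N-1)\,\mathbb E_{\theta\sim Q}\Delta(\theta)^2$ by $\mathcal D(Q\|P)$ plus the logarithmic terms, uniformly in $Q$.

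First I would invoke the Donsker--Varadhan variational formula for the KL divergence: for any measurable function $\phi$ and every $Q$ (the sample $S$ being held fixed),
\begin{equation*}
\mathbb E_{\theta \sim Q} \phi(\theta) \le \mathcal D(Q\|P) + \log \mathbb E_{\theta \sim P}\, e^{\phi(\theta)}.
\end{equation*}
Taking $\phi(\theta) = (2N-1)\Delta(\theta)^2$ yields
\begin{equation*}
(2N-1)\,\mathbb E_{\theta \sim Q}\Delta(\theta)^2 \le \mathcal D(Q\|P) + \log \mathbb E_{\theta \sim P}\, e^{(2N-1)\Delta(\theta)^2}.
\end{equation*}
Crucially, the right‑hand log‑moment no longer depends on $Q$, so the uniformity over all posteriors is bought for free; it remains only to show that the data‑dependent quantity $V(S) := \mathbb E_{\theta \sim P}\, e^{(2N-1)\Delta(\theta)^2}$ is small with high probability over $S$.

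Next I would bound $\mathbb E_S V(S)$ and apply Markov's inequality. By Fubini's theorem, $\mathbb E_S V(S) = \mathbb E_{\theta \sim P}\,\mathbb E_S\, e^{(2N-1)\Delta(\theta)^2}$, so it suffices to estimate the inner exponential moment for each fixed $\theta$. Here I would use that $\hat{\mathcal R}(\theta)$ is an average of $N$ i.i.d.\ $[0,1]$‑valued losses with mean $\mathcal R(\theta)$: Pinsker's inequality, $2\Delta(\theta)^2 \le \mathrm{kl}(\hat{\mathcal R}(\theta)\,\|\,\mathcal R(\theta))$, converts the squared gap into the binary KL divergence, and a Chernoff/Hoeffding‑type estimate (in Maurer's sharpened form) bounds $\mathbb E_S\, e^{(2N-1)\,\mathrm{kl}(\hat{\mathcal R}(\theta)\|\mathcal R(\theta))}$ by a quantity of order $N$, which is exactly what produces the $\log N + 2$ summand. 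Thus $\mathbb E_S V(S) \le \xi(N)$ with $\log \xi(N) \le \log N + 2$, and Markov's inequality gives, with probability at least $1-\delta$, that $V(S) \le \xi(N)/\delta$, whence $\log V(S) \le \log N + 2 + \log(1/\delta)$.

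Putting the pieces together, on this event of probability $1-\delta$ I would chain the inequalities to obtain, for every $Q$ at once,
\begin{equation*}
(2N-1)\big(\mathcal R(Q)-\hat{\mathcal R}(Q)\big)^2 \le (2N-1)\,\mathbb E_{\theta\sim Q}\Delta(\theta)^2 \le \mathcal D(Q\|P) + \log\tfrac{1}{\delta} + \log N + 2,
\end{equation*}
and dividing by $2N-1$ and taking square roots yields the claimed bound. The main obstacle is the exponential‑moment estimate of the third paragraph: controlling $\mathbb E_S\, e^{(2N-1)\Delta(\theta)^2}$ with the sharp constants requires the Pinsker‑plus‑Maurer route rather than a crude bound, since a naive sub‑Gaussian estimate of the moment generating function of $\Delta(\theta)^2$ would either diverge or degrade both the $2N-1$ denominator and the additive constant. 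A secondary subtlety is that the argument needs the loss to be bounded (here in $[0,1]$), an assumption implicit in the cited theorem though not explicit in the statement; the measurability of $\phi$ and the applicability of Fubini are granted by the standing conventions of the paper.
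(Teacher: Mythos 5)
The paper offers no proof of this lemma: it is imported verbatim from \citet{mcallester1999pac} (Theorem 1), so there is no internal argument to compare against. Your sketch is a correct reconstruction of the now-standard proof of that theorem (the change-of-measure route, as in the later streamlined treatments by Seeger, Maurer, and McAllester himself, rather than McAllester's original 1999 quantifier-reversal argument): Jensen to pass to $\mathbb E_{\theta\sim Q}\Delta(\theta)^2$, Donsker--Varadhan with $\phi=(2N-1)\Delta^2$ to get uniformity over $Q$ for free, and Fubini plus Markov to reduce everything to the exponential moment $\mathbb E_S\,e^{(2N-1)\Delta(\theta)^2}$ for fixed $\theta$. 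Two small points to tighten. First, after Pinsker ($2\Delta^2\le \mathrm{kl}$) the exponent you hand to Maurer's lemma should be $\tfrac{2N-1}{2}\,\mathrm{kl}\le N\,\mathrm{kl}$, not $(2N-1)\,\mathrm{kl}$ as written; taken literally the latter is fatal, since $\mathbb E_S\,e^{\lambda N\,\mathrm{kl}}$ is not polynomially bounded in $N$ for $\lambda>1$ (e.g.\ the atom at $\hat{\mathcal R}=1$ contributes $p^{-N(\lambda-1)}$), whereas with the correct exponent Maurer's $\mathbb E_S\,e^{N\,\mathrm{kl}}\le 2\sqrt N\le e^2N$ closes the argument. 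Second, the Pinsker-plus-Maurer detour is not actually needed for these constants: integrating the two-sided Hoeffding tail directly gives $\mathbb E_S\,e^{(2N-1)\Delta^2}\le 1+2(2N-1)\le 4N\le e^2N$, which already yields the $\log N+2$ summand. Your observation that the loss must be bounded in $[0,1]$ is correct and worth recording, since the paper leaves this hypothesis implicit both here and in Theorems \ref{thm:PAC-Bayesian_SGD-PT} and \ref{thm:PAC-Bayesian_SGD-FT}, which inherit it.
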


This lemma characterizes the influence on the generalization via the distance between the distribution $Q$ of the learned hypothesis and the prior $P$ measured by the KL divergence $\mathcal D(Q || P)$. The KL divergence serves as a hypothesis complexity measure. In specific, a larger KL divergence corresponds to a larger hypothesis complexity and further a worse generalizability.

\subsection{Generalization bound}

We then obtain a generalization bound for the pre-training stage as follows.

\begin{theorem}
\label{thm:PAC-Bayesian_SGD-PT}
	For any positive real $\delta \in (0, 1)$, with probability at least $1 - \delta$ over a training sample set of size $N_{PT}$, we have the following inequality for the distribution $Q$ of the output hypothesis function of SGD:
	\begin{align}
	\label{eq:thm:PAC-Bayesian_SGD-PT}
		& R(Q_{PT}) \le \hat R(Q_{PT}) 
		+ \sqrt{\frac{D(Q_{PT}, P) + 2 \log \left( \frac{1}{\delta} \right) + 2 \log N_{PT} + 4}{4N_{PT} - 2}},
	\end{align}
	where
\begin{align*}
D(Q_{PT}, P) 
= \log \left({\det(\Sigma_{PT})} \right) + \text{tr}(\Sigma_{PT} - I).
\end{align*}
\end{theorem}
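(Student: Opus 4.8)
The plan is to instantiate the generic PAC-Bayesian inequality of Lemma~\ref{lemma:PAC-Bayesian} at the specific posterior produced by the pre-training diffusion and then collapse the KL term into a closed form. From eq.~\eqref{eq:stationary_distribution}, the stationary distribution of the Uhlenbeck--Ornstein process is the centered Gaussian $Q_{PT} = \mathcal N(0, \Sigma_{PT})$. For the prior I would take the no-knowledge initialization $P = \mathcal N(0, I)$, the standard Gaussian from which the parameters are drawn before training. The essential structural point is that $P$ is fixed independently of the training sample, so the data-dependence of $Q_{PT}$ (through $\Sigma_{PT}$, and hence through $A_{PT}$ and $C_{PT}$) is harmless: Lemma~\ref{lemma:PAC-Bayesian} holds simultaneously for \emph{all} distributions $Q$, so substituting the data-dependent Gaussian $Q_{PT}$ at the end is legitimate.

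The central computation is the KL divergence between two Gaussians, which I would evaluate using its closed form. Because both distributions are centered at the origin, the mean-difference contribution vanishes and only the covariance terms survive, giving
\begin{equation*}
\mathcal D(Q_{PT} \Vert P) = \frac{1}{2}\left[ \text{tr}(\Sigma_{PT} - I) - \log\det(\Sigma_{PT}) \right].
\end{equation*}
I would then define $D(Q_{PT}, P) := 2\,\mathcal D(Q_{PT} \Vert P)$, which recovers the trace term $\text{tr}(\Sigma_{PT} - I)$ of the statement exactly. Note that my derivation produces $-\log\det(\Sigma_{PT})$, whereas the stated theorem carries $+\log\det(\Sigma_{PT})$; I would reconcile this against the orientation convention for $\mathcal D(Q \Vert P)$ fixed in Definition~\ref{def:KL_divergence}, since the sign of the log-determinant is exactly what distinguishes $\mathcal D(Q \Vert P)$ from $\mathcal D(P \Vert Q)$.

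The remaining step is purely algebraic. Substituting $N = N_{PT}$ into Lemma~\ref{lemma:PAC-Bayesian} and multiplying both numerator and denominator inside the square root by $2$ converts $\mathcal D(Q_{PT} \Vert P) = \tfrac{1}{2} D(Q_{PT}, P)$, $\log\frac{1}{\delta}$, $\log N_{PT}$, and the constant $2$ into $D(Q_{PT}, P)$, $2\log\frac{1}{\delta}$, $2\log N_{PT}$, and $4$, while the denominator $2N_{PT}-1$ becomes $4N_{PT}-2$. This reproduces eq.~\eqref{eq:thm:PAC-Bayesian_SGD-PT}.

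I do not anticipate a deep obstacle: the local-quadratic assumption already supplies a Gaussian posterior in closed form, so the argument is essentially a substitution into Lemma~\ref{lemma:PAC-Bayesian}. The one place demanding genuine care is the Gaussian KL evaluation itself --- getting the sign of $\log\det(\Sigma_{PT})$ right, and confirming that it is $\text{tr}(\Sigma_{PT})$ rather than $\text{tr}(\Sigma_{PT}^{-1})$ that appears (which is what pins down the KL direction) --- together with the bookkeeping that justifies the doubled constants and the $4N_{PT}-2$ denominator.
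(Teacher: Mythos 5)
Your proposal follows essentially the same route as the paper: instantiate Lemma~\ref{lemma:PAC-Bayesian} with the prior $P=\mathcal N(0,I)$ and the stationary Gaussian posterior $Q_{PT}=\mathcal N(0,\Sigma_{PT})$, evaluate the Gaussian KL divergence in closed form, and rescale numerator and denominator by $2$ to obtain the $4N_{PT}-2$ form. The sign discrepancy you flag is real but is not an error on your side: the paper's own proof likewise derives $\mathcal D(Q_{PT}\Vert P)=\tfrac{1}{2}\log\bigl(1/\det(\Sigma_{PT})\bigr)+\tfrac{1}{2}\operatorname{tr}(\Sigma_{PT}-I)$, i.e.\ a $-\log\det(\Sigma_{PT})$ term consistent with your computation, so the $+\log\det(\Sigma_{PT})$ in the theorem statement appears to be a typo in the paper rather than a flaw in your argument.
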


The proof for this generalization bound has two parts: (1) utilize results from stochastic differential equation (SDE) to find the stationary solution of the latent Ornstein-Uhlenbeck process (eq. \ref{eq:OU_process}) which expresses the iterative update of SGD; and (2) adapt the PAC-Bayes framework to obtain the generalization bound based on the stationary distribution. A detailed proof is omitted here and is given in Appendix \ref{sec:proof_PAC-Bayesian_SGD}.

Similarly, we can obtain a generalization bound for the fine-tuning stage as follows.

\begin{theorem}
\label{thm:PAC-Bayesian_SGD-FT}
	For any positive real $\delta \in (0, 1)$, with probability at least $1 - \delta$ over a training sample set of size $N_{FT}$, we have the following inequality for the distribution $Q_{FT}$ of the output hypothesis function of SGD:
	\begin{align*}
	\label{eq:thm:PAC-Bayesian_SGD-FT}
		& R(Q_{FT}) \le \hat R(Q_{FT}) 
		+ \sqrt{\frac{D(Q_{FT}, Q_{PT}) + 2 \log \left( \frac{1}{\delta} \right) + 2 \log N_{FT} + 4}{4N_{FT} - 2}},
	\end{align*}
	where
\begin{align*}
& D(Q_{FT}, Q_{PT}) 
= \log \left({\det(\Sigma_{PT}^{-1}\Sigma_{FT})} \right) + \text{tr}(\Sigma_{PT}^{-1}\Sigma_{FT} - I) + \theta_{FT}^\top \Sigma_{PT}^{-1}\theta_{FT}.
\end{align*}
\end{theorem}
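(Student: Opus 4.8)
The plan is to reduce Theorem \ref{thm:PAC-Bayesian_SGD-FT} to the generic PAC-Bayes inequality of Lemma \ref{lemma:PAC-Bayesian} exactly as in the pre-training case, the only change being that the prior is now the learned pre-training law $Q_{PT}$ rather than a no-knowledge Gaussian. The two stationary solutions stated earlier already supply the relevant distributions: eq.~(\ref{eq:stationary_distribution}) identifies the pre-training output as the centered Gaussian $Q_{PT} = \mathcal N(0, \Sigma_{PT})$, and eq.~(\ref{eq:stationary_distribution_FT}) identifies the fine-tuning output as the shifted Gaussian $Q_{FT} = \mathcal N(\theta_{FT}, \Sigma_{FT})$. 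First I would instantiate Lemma \ref{lemma:PAC-Bayesian} with posterior $Q = Q_{FT}$, prior $P = Q_{PT}$, and sample size $N = N_{FT}$, so that the output law of fine-tuning is measured against the pre-trained prior.

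Second, I would account for the rescaled constants $4N_{FT}-2$, $2\log(1/\delta)$, $2\log N_{FT}$, and $4$ appearing in the bound. Multiplying the numerator and denominator inside the square root of Lemma \ref{lemma:PAC-Bayesian} by $2$ shows that the stated bound coincides with the lemma provided one sets $D(Q_{FT}, Q_{PT}) = 2\,D_{KL}(Q_{FT}\Vert Q_{PT})$; that is, the symbol $D$ denotes twice the KL divergence of Definition \ref{def:KL_divergence}. This bookkeeping is routine and identical to the argument used for Theorem \ref{thm:PAC-Bayesian_SGD-PT} in the appendix.

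The substantive step is to evaluate the KL divergence between two non-degenerate multivariate Gaussians in closed form. Writing $D_{KL}(Q_{FT}\Vert Q_{PT}) = \mathbb E_{\theta \sim Q_{FT}}[\log q_{FT}(\theta) - \log q_{PT}(\theta)]$, expanding the log-normalizers via $\log M = -\tfrac{d}{2}\log(2\pi) - \tfrac{1}{2}\log\det\Sigma$, and using $\mathbb E_{\theta\sim Q_{FT}}[(\theta-\theta_{FT})(\theta-\theta_{FT})^\top] = \Sigma_{FT}$ together with the cyclic property of the trace, the cross terms collapse and one obtains
\[
2\,D_{KL}(Q_{FT}\Vert Q_{PT}) = \text{tr}(\Sigma_{PT}^{-1}\Sigma_{FT} - I) + \theta_{FT}^\top \Sigma_{PT}^{-1}\theta_{FT} - \log\det(\Sigma_{PT}^{-1}\Sigma_{FT}),
\]
where the Mahalanobis term $\theta_{FT}^\top \Sigma_{PT}^{-1}\theta_{FT}$ is precisely the new contribution of the distribution shift that was absent in Theorem \ref{thm:PAC-Bayesian_SGD-PT}. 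Matching this expression against the stated $D(Q_{FT}, Q_{PT})$ then yields the claim, with the expected $\mathcal O(1/\sqrt{N_{FT}})$ rate coming directly from the square-root structure.

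The main obstacle I anticipate is purely in the determinant/trace algebra rather than in any analytic difficulty: the ratio of normalizers $\log(M_{FT}/M_{PT}) = -\tfrac{1}{2}\log\det(\Sigma_{PT}^{-1}\Sigma_{FT})$ must be tracked carefully, since the forward KL $D_{KL}(Q_{FT}\Vert Q_{PT})$ naturally places the log-determinant with a negative sign, so reconciling it against the convention written in the theorem statement is the one place where I would double-check signs. Everything else — existence and uniqueness of the stationary law, and the Lyapunov relation $\Sigma_{FT}A_{FT} + A_{FT}\Sigma_{FT} = \tfrac{\eta_{FT}}{|S_{FT}|}C_{FT}$ that fixes $\Sigma_{FT}$ — is inherited from the assumptions and from the pre-training argument and needs no re-derivation here.
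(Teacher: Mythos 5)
Your proposal follows essentially the same route as the paper's own proof: instantiate Lemma~\ref{lemma:PAC-Bayesian} with posterior $Q_{FT}$, prior $Q_{PT}$, and sample size $N_{FT}$, evaluate the Gaussian--Gaussian KL divergence in closed form using the stationary laws $\mathcal N(0,\Sigma_{PT})$ and $\mathcal N(\theta_{FT},\Sigma_{FT})$, and absorb the factor of two into the definition of $D$. The sign you flag is the one real point of divergence, and you have it right: your $-\log\det(\Sigma_{PT}^{-1}\Sigma_{FT})$ is the standard formula, and the paper's own intermediate line $\tfrac{1}{2}\log\left(\det(\Sigma_{PT})/\det(\Sigma_{FT})\right)$ agrees with you before the final line of eq.~(\ref{eq:proof:KL}) silently flips the sign to match the theorem statement, so the discrepancy is a sign slip in the paper rather than a gap in your argument.
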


\begin{remark}
The generalization bounds in both pre-training and fine-tuning stages are in order of $\mathcal O(1/\sqrt{N})$, which suggests that the generalization error converges to zero when the training sample size goes to infinity.
\end{remark}


\subsection{Dominance of fine-tuning in generalization of domain adaptation}

In the super-model paradigm, the model is usually pre-trained on large amounts of data in a wide source domain and then fine-tuned on specific domains with relatively smaller training data. The training sample size $N_{PT}$ in the source domain is significantly larger than the size $N_{FT}$ of the data in the target domain. For example, the GPT-3 is trained on 45TB data. Meanwhile, the training sample size in the target domain is relatively smaller. 

\begin{remark}
Combining Theorems \ref{thm:PAC-Bayesian_SGD-PT} and \ref{thm:PAC-Bayesian_SGD-FT}, the comparison between the training sample sizes on the source domain and the target domain suggests that the generalization error of the fine-tuning stage is dominant in the super-model paradigm.
\end{remark}

\subsection{Impact of the domain shifts}

Theorem \ref{thm:PAC-Bayesian_SGD-FT} helps characterize how the domain shifts between the source domain and the target domain influences the generalization on the target domain. The domain shifts are measured by the following discrepancy.

\begin{definition}[Domain discrepancy]
\label{definition:domain_discrepancy}
Suppose the distributions of the learned models in the pre-training and fine-tuning are $Q_{FT}$ and $Q_{PT}$ as follows,
\begin{align}
	q_{PT}(\theta) = & M_{PT} \exp \left\{ -\frac{1}{2}\theta^{\top} \Sigma_	{PT}^{-1} \theta \right\},\nonumber\\
	q_{FT}(\theta) = & M_{FT} \exp \left\{ -\frac{1}{2}(\theta - \theta_{FT})^{\top} \Sigma_
	{FT}^{-1} (\theta - \theta_{FT}) \right\},
\end{align}
where $M_{PT}$ and $M_{FT}$ are two normalizers, $\Sigma_{PT}$ and $\Sigma_{FT}$ are two covariance matrices, and $\theta_{FT}$ is the center shift between the two learned hypotheses.

Then, the domain discrepancy between the two domains are defined as below,
\begin{align*}
& D(Q_{FT}, Q_{PT}) 
= \log \left({\det(\Sigma_{PT}^{-1}\Sigma_{FT})} \right) + \text{tr}(\Sigma_{PT}^{-1}\Sigma_{FT} - I) + \theta_{FT}^\top \Sigma_{PT}^{-1}\theta_{FT}.
\end{align*}
\end{definition}

\begin{remark}
In Definition \ref{definition:domain_discrepancy}, we assume the distribution of the pre-trained model is centered at the zero point.
This assumption would not hurt the generality. Suppose the distribution center is not at the zero point. One may move it to the zero point via reparamterization.
\end{remark}

\begin{remark}
The domain discrepancy $D(Q_{FT}, Q_{PT})$ is constituted by two parts: (1) $\Sigma_{PT}^{-1}\Sigma_{FT}$ characterizes the matchness in the aspect of covariance; and (2) $\theta_{FT}^\top \Sigma_{PT}^{-1}\theta_{FT}$ characterizes the matchness of the center shift in the lens of the the covariance in the source domain.
\end{remark}

\begin{remark}
Our generalization bound suggests that the generalization on the target domain is determined by the magnitude of the domain shifts. Generally, larger domain shifts lead to worse generalization on the target domain. This supports the heuristic in practice that the performance on the target domain is limited by the domain shifts.
\end{remark}

Based on Definition \ref{definition:domain_discrepancy}, one can get the following lemma.

\begin{lemma}
\label{lemma:domain_discrepancy}
The domain discrepancy $D(Q_{FT}, Q_{PT})$ can be rearranged as follows,
\begin{align*}
& D(Q_{FT}, Q_{PT}) \nonumber\\
\le & \log(\text{tr}(\Sigma_{PT}^{-1}\Sigma_{FT})) + \text{tr}(\Sigma_{PT}^{-1}\Sigma_{FT}) + \theta_{FT}^\top \Sigma_{PT}^{-1}\theta_{FT} + d \log d - d.
\end{align*}
\end{lemma}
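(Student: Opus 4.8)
The plan is to strip the matrix inequality down to a scalar inequality over the eigenvalues of $\Sigma_{PT}^{-1}\Sigma_{FT}$ and then close it with the concavity of the logarithm (equivalently, AM--GM). First I would note that, since $\Sigma_{PT}$ and $\Sigma_{FT}$ are symmetric positive-definite, the matrix $M := \Sigma_{PT}^{-1}\Sigma_{FT}$ is similar to the symmetric positive-definite matrix $\Sigma_{PT}^{-1/2}\Sigma_{FT}\Sigma_{PT}^{-1/2}$ (conjugate by $\Sigma_{PT}^{1/2}$). Hence $M$ has $d$ positive real eigenvalues $\lambda_1,\dots,\lambda_d$, and since the determinant and trace are invariant under similarity I may write $\det(M)=\prod_{i=1}^d \lambda_i$ and $\text{tr}(M)=\sum_{i=1}^d \lambda_i$, with every $\lambda_i>0$ so that all logarithms below are well defined.

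Next I would substitute these spectral expressions into $D(Q_{FT},Q_{PT})$. Writing $c:=\theta_{FT}^\top \Sigma_{PT}^{-1}\theta_{FT}$ and using $\text{tr}(M-I)=\text{tr}(M)-d$, the definition becomes
\begin{equation*}
D(Q_{FT},Q_{PT}) = \sum_{i=1}^d \log\lambda_i + \sum_{i=1}^d \lambda_i - d + c.
\end{equation*}
The shift term $c$ and the linear term $\sum_i\lambda_i=\text{tr}(M)$ already appear verbatim on the right-hand side of the claimed bound, so after cancelling them the whole inequality collapses to the purely spectral claim
\begin{equation*}
\sum_{i=1}^d \log\lambda_i \;\le\; \log\!\left(\sum_{i=1}^d \lambda_i\right) + d\log d,
\end{equation*}
that is, $\log\det(M)\le \log\text{tr}(M)+d\log d$. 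This is the heart of the matter.

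To attack the scalar claim I would invoke the concavity of $\log$ via Jensen's inequality applied to $\lambda_1,\dots,\lambda_d$, which gives $\tfrac1d\sum_i\log\lambda_i \le \log\bigl(\tfrac1d\sum_i\lambda_i\bigr)$, i.e. the logarithmic form of AM--GM, $\prod_i\lambda_i\le(\text{tr}(M)/d)^d$. I would then carefully bookkeep the dimensional constants that AM--GM produces against the $d\log d$ and $-d$ corrections in the statement, collecting the $\log d$ normaliser contributions as they arise.

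The step I expect to be the main obstacle is precisely this final constant-matching. AM--GM naturally controls $\log\det(M)$ by $d\log\text{tr}(M)$ with a $-d\log d$ correction, whereas the lemma records the bound in the form $\log\text{tr}(M)+d\log d$; reconciling the dimensional prefactor (coefficient $d$ versus $1$ on $\log\text{tr}(M)$) and the sign of the $d\log d$ term requires genuine care, particularly in the regime of large eigenvalues where $\log\det(M)$ and $\log\text{tr}(M)$ diverge at different rates. I would therefore treat the verification of the exact constants, rather than the eigen-reduction, as the delicate part of the argument, and I would pin down the admissible range of the spectrum that makes the stated constants tight before declaring the bound proved.
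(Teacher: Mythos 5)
Your reduction is exactly the right one, and it is in essence the same move the paper's own one\nobreakdash-line proof makes: pass to the eigenvalues $\lambda_1,\dots,\lambda_d>0$ of $M:=\Sigma_{PT}^{-1}\Sigma_{FT}$, cancel the terms $\text{tr}(M)$, $\theta_{FT}^\top\Sigma_{PT}^{-1}\theta_{FT}$ and $-d$ common to both sides, and observe that the lemma collapses to $\log\det(M)\le\log\text{tr}(M)+d\log d$. But the obstacle you flag at the end is not a bookkeeping nuisance to be resolved with care --- it is fatal. AM--GM gives $\det(M)\le(\text{tr}(M)/d)^d$, i.e. $\log\det(M)\le d\log\text{tr}(M)-d\log d$, and the stated inequality, with coefficient $1$ on $\log\text{tr}(M)$ and a $+d\log d$ term, is simply false: take $M=cI$ with $d\ge 2$, so that $\log\det(M)=d\log c$ while $\log\text{tr}(M)+d\log d=\log c+\log d+d\log d$; for $c$ large the left side exceeds the right. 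No ``admissible range of the spectrum'' rescues the bound globally, so you should not expect to close the argument as stated.

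The paper's proof commits precisely the error you were worried about: it asserts $\log\det(M)\le\log(d^d\,\text{tr}(M))=d\log d+\log\text{tr}(M)$, whereas AM--GM yields $\log\det(M)\le\log\bigl((\text{tr}(M))^d/d^d\bigr)=d\log\text{tr}(M)-d\log d$ (the paper's displayed chain also silently drops the $\theta_{FT}^\top\Sigma_{PT}^{-1}\theta_{FT}$ term in its last line, apparently a typo). The conclusion your argument actually delivers is
\begin{equation*}
D(Q_{FT},Q_{PT})\le d\log\bigl(\text{tr}(\Sigma_{PT}^{-1}\Sigma_{FT})\bigr)+\text{tr}(\Sigma_{PT}^{-1}\Sigma_{FT})+\theta_{FT}^\top\Sigma_{PT}^{-1}\theta_{FT}-d\log d-d,
\end{equation*}
and Definition \ref{definition:domain_discrepancy_II} and Corollary \ref{cor:PAC-Bayesian_SGD-FT}, which inherit the erroneous right-hand side verbatim, would need to be amended in the same way. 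In short: your eigen-reduction and your use of Jensen are correct and correctly diagnose the defect, but the lemma as written cannot be proved because it does not hold.
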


\begin{proof}[Proof of Lemma \ref{lemma:domain_discrepancy}]
We have that
\begin{align*}
& D(Q_{FT}, Q_{PT}) \nonumber\\
= & \log \left({\det(\Sigma_{PT}^{-1}\Sigma_{FT})} \right) + \text{tr}(\Sigma_{PT}^{-1}\Sigma_{FT} - I) + \theta_{FT}^\top \Sigma_{PT}^{-1}\theta_{FT} \nonumber\\
\le & \log (d^d \text{tr}(\Sigma_{PT}^{-1}\Sigma_{FT})) - d + \text{tr}(\Sigma_{PT}^{-1}\Sigma_{FT}).
\end{align*}
\end{proof}

Based on Lemma \ref{lemma:domain_discrepancy}, we define a new notion for measuring the domain shifts as follows.

\begin{definition}[Dimension-dependent domain discrepancy]
\label{definition:domain_discrepancy_II}
Suppose the distributions of the learned models in the pre-training and fine-tuning are $Q_{FT}$ and $Q_{PT}$ as follows,
\begin{align}
	q_{PT}(\theta) = & M_{PT} \exp \left\{ -\frac{1}{2}\theta^{\top} \Sigma_	{PT}^{-1} \theta \right\},\nonumber\\
	q_{FT}(\theta) = & M_{FT} \exp \left\{ -\frac{1}{2}(\theta - \theta_{FT})^{\top} \Sigma_
	{FT}^{-1} (\theta - \theta_{FT}) \right\},
\end{align}
where $M_{PT}$ and $M_{FT}$ are two normalizers, $\Sigma_{PT}$ and $\Sigma_{FT}$ are two covariance matrices, and $\theta_{FT}$ is the center shift between the two learned hypotheses.

Then, the domain discrepancy between the two domains are defined as below,
\begin{align*}
& \tilde D(Q_{FT}, Q_{PT}) \nonumber\\
= & \log(\text{tr}(\Sigma_{PT}^{-1}\Sigma_{FT})) + \text{tr}(\Sigma_{PT}^{-1}\Sigma_{FT}) + \theta_{FT}^\top \Sigma_{PT}^{-1}\theta_{FT} + d \log d - d.
\end{align*}
\end{definition}

From Theorem \ref{thm:PAC-Bayesian_SGD-FT}, one may obtain the following corollary.

\begin{corollary}
\label{cor:PAC-Bayesian_SGD-FT}
	For any positive real $\delta \in (0, 1)$, with probability at least $1 - \delta$ over a training sample set of size $N_{FT}$, we have the following inequality for the distribution $Q_{FT}$ of the output hypothesis function of SGD:
	\begin{align}
	\label{eq:thm:PAC-Bayesian_SGD-FT}
		& R(Q_{FT}) \le \hat R(Q_{FT}) 
		+  \sqrt{\frac{\tilde D(Q_{FT}, Q_{PT}) + 2 \log \left( \frac{1}{\delta} \right) + 2 \log N_{FT} + 4}{4N_{FT} - 2}},
	\end{align}
	where
\begin{align*}
& \tilde D(Q_{FT}, Q_{PT}) \nonumber\\
= & \log(\text{tr}(\Sigma_{PT}^{-1}\Sigma_{FT})) + \text{tr}(\Sigma_{PT}^{-1}\Sigma_{FT}) + \theta_{FT}^\top \Sigma_{PT}^{-1}\theta_{FT} + d \log d - d.
\end{align*}
and $A$ is the Hessian matrix of the loss function around the local minimum.
\end{corollary}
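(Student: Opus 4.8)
The plan is to obtain Corollary \ref{cor:PAC-Bayesian_SGD-FT} as an immediate consequence of Theorem \ref{thm:PAC-Bayesian_SGD-FT} together with the algebraic estimate recorded in Lemma \ref{lemma:domain_discrepancy}, so that no fresh probabilistic argument is required; the corollary is simply the looser but more interpretable, dimension-dependent reformulation of the theorem in which the determinant term of $D(Q_{FT}, Q_{PT})$ has been traded for the trace-based quantity $\tilde D(Q_{FT}, Q_{PT})$. First I would invoke Theorem \ref{thm:PAC-Bayesian_SGD-FT}, which already guarantees that for any $\delta \in (0,1)$, with probability at least $1-\delta$ over the fine-tuning sample of size $N_{FT}$, the expected risk $R(Q_{FT})$ is bounded by $\hat R(Q_{FT})$ plus the square-root term whose numerator contains the exact discrepancy $D(Q_{FT}, Q_{PT})$. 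This pins down the high-probability event on which the entire statement lives, and the corollary will inherit exactly the same event and the same confidence level $1-\delta$ without modification.

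The second step is to replace $D(Q_{FT}, Q_{PT})$ by its upper bound. Lemma \ref{lemma:domain_discrepancy} supplies precisely the inequality $D(Q_{FT}, Q_{PT}) \le \tilde D(Q_{FT}, Q_{PT})$, where the right-hand side equals $\log(\text{tr}(\Sigma_{PT}^{-1}\Sigma_{FT})) + \text{tr}(\Sigma_{PT}^{-1}\Sigma_{FT}) + \theta_{FT}^\top \Sigma_{PT}^{-1}\theta_{FT} + d\log d - d$. Since this lemma is already established in the excerpt, I would simply quote it and not re-derive the underlying bound on $\log\det(\Sigma_{PT}^{-1}\Sigma_{FT})$.

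The final step is a monotonicity argument that pushes the inequality $D \le \tilde D$ through the bound. Writing the right-hand side of Theorem \ref{thm:PAC-Bayesian_SGD-FT} as $\hat R(Q_{FT}) + \sqrt{(x+c)/(4N_{FT}-2)}$ with $x = D(Q_{FT}, Q_{PT})$ and $c = 2\log(1/\delta) + 2\log N_{FT} + 4$, I observe that for $N_{FT}\ge 1$ the denominator $4N_{FT}-2$ is positive, the constant $c$ is positive, and the map $x \mapsto \sqrt{(x+c)/(4N_{FT}-2)}$ is non-decreasing on $[0,\infty)$. Because $D(Q_{FT}, Q_{PT})$ is a KL divergence it is non-negative, so $x$ lies in this domain; substituting the larger value $\tilde D(Q_{FT}, Q_{PT})$ therefore only enlarges the square-root term and preserves the direction of the inequality. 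Chaining the two estimates delivers exactly the claimed bound on the same probability-$(1-\delta)$ event.

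The hard part, such as it is, will not be analytical but bookkeeping: I must confirm that the high-probability event of Theorem \ref{thm:PAC-Bayesian_SGD-FT} is carried over verbatim, so that the substitution does not silently degrade the confidence $1-\delta$, and that the monotonicity step is legitimate, which rests only on the non-negativity of the KL divergence and the positivity of $4N_{FT}-2$. Everything else reduces to the content of the theorem and lemma already assumed.
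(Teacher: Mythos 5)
Your proposal is correct and matches the route the paper intends: the corollary follows immediately from Theorem \ref{thm:PAC-Bayesian_SGD-FT} by substituting the upper bound $D(Q_{FT}, Q_{PT}) \le \tilde D(Q_{FT}, Q_{PT})$ from Lemma \ref{lemma:domain_discrepancy} and using monotonicity of the square-root term in its numerator, with the high-probability event unchanged. The paper leaves this proof implicit ("From Theorem \ref{thm:PAC-Bayesian_SGD-FT}, one may obtain the following corollary"), and your bookkeeping of the confidence level and the non-negativity of the KL divergence fills in exactly the intended details.
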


\section{Discussion and future work}
\label{sec:discussion}

Large-scale pre-trained models, such as GPT-3 and Bert, enables a new industrial paradigm: pre-training a super model on large amounts of multi-modality data (sometimes of low-quality) and then fine-tuning the learned model to smaller specific application domains. This paradigm may start a {\it super-model paradigm} that would significantly reduce the application cost of machine learning, which is critical for enormous small and medium-sized enterprises.

A major technique in this paradigm is domain adaptation which enables the knowledge transfer between the two domains. We model a super-model paradigm as a two-stage diffusion process: (1) in the pre-training stage, the trajectory of the stochastic gradient descent (SGD) or its variants searches on the loss surface driven by Uhlenbeck-Ornstein process discretely or smoothly by the Fokker-Plank equation. The model weight starts from a no-knowledge prior and converges to a Maxwell-Boltzmann distribution; and (2) in the fine-tuning stage, the trajectory of SGD is driven by a similar SDE, which starts from the learned model distribution in the pre-training stage and converges to another Maxwell-Boltzmann distribution. Based on the diffusion processes, an $\mathcal O(1/\sqrt{N})$ generalization bound is obtained via the PAC-Bayesian framework. 

The generalization bounds suggest that the fine-tuning stage dominates the generalization of the whole paradigm. The generalization is determined by the domain discrepancy between the pre-training and fine-tuning domains, which is characterized by a new measure based on the covariance and domain shifts.

In this work, we make several assumptions and abstractions. This section discusses the limitation introduced by them and give several potential extensions.

\begin{itemize}
\item
\textbf{Model compression in the fine-tuning stage.} In this paper, we ignore the model compression approaches in the fine-tuning stage, which are sometimes employed in practice. Popular model compression methods include model distillation, pruning, and quantization. The effects of model compression can be seen as operators on the loss surface and the learned model. A future direction is to mathematically characterize the influence of model compression. The results may be plug-and-play components to the presented theory in this paper.

\item
\textbf{Gradient noise in SGD.} In this paper, we assume that the gradient noise is drawn from a Gaussian distribution. Recent works also made assumptions that the gradient noise as a Levy process, Laplacian noise, etc. The exact distribution of the gradient noise is still an open problem. In addition, the gradient noise is assumed state-independent, which can be easily extended to state-dependent. A future direction is to study the distribution of the gradient noise in SGD. It is worth noting that relatively little efforts are needed to change the gradient noise distribution assumptions in this paper. 

\item
\textbf{Advanced techniques in modeling SGD.} In this paper, we model the trajectory of SGD via Fokker-Planck equation and Uhlenbeck-Ornstein process. This modeling ignores the influence of several techniques, such as momentum and adaptive learning rate. Recent works discover that these techniques may have implicit regularization on the learned model while would not have determinant impact. A future direction is modeling the SGD as a more sophisticated stochastic differential equation.

\item
\textbf{Distribution/data-dependent priors.} Some works design priors relying on the data generation distribution but still not directly relying on the training data. This would be reasonable since we can assume the data distribution has been fixed before the data was collected \citep{lever2013tighter}. Such {\it distribution-dependent} priors have shown to be able to considerably tighten the generalization bounds. \citet{negrea2019information} further push the frontier that constructs priors not independent with data. Suppose $S_j$ is a subset of $\subset S$ with size of $n < m$. One may design a prior exploiting $S_J$ to deliver a data-dependent forecast of the posterior $Q$. A future direction is modeling the SGD via distribution/data-dependent priors.
\end{itemize}

\section{Proofs}
\label{sec:proof}

This section presents the proofs for the given theory.

We model the the iterative updates in SGD employing a stochastic differential equation. This approach is also seen in the literature; see, e.g., \citet{e2017proposal, mandt2017stochastic, mou2017generalization, he2019control, meng2020dynamic, cheng2020stochastic, xie2020diffusion, wang2021learning}.

We first translate the updates in SGD as Ornstein-Uhlenbeck process \citep{uhlenbeck1930theory} under some mild assumptions. The Ornstein-Uhlenbeck process has a steady distribution which is then employed to characterize the distribution of the learned hypothesis. We further obtain a generalization bound via PAC-Bayesian framework by exploiting the stationary distribution, which characterizes the influence on the generalization via the distance between the output hypothesis distribution and its prior \citep{mcallester1999pac, mcallester1999some}.

\subsection{Proof of Theorem \ref{thm:PAC-Bayesian_SGD-PT}}
\label{sec:proof_PAC-Bayesian_SGD}

The proof for Theorem \ref{thm:PAC-Bayesian_SGD-PT} replies on the following lemma.

\begin{lemma}[cf. \citet{mandt2017stochastic}, pp. 27-18, Appendix B]
	Under the second-order differentiable assumption (eq. \ref{eq:local_property}), the Ornstein-Uhlenbeck process (eq. \ref{eq:OU_process})'s stationary distribution,
\begin{equation}
\label{eq:stationary_distribution}
	q(\theta) = M \exp \left\{ -\frac{1}{2}\theta^{\top} \Sigma_{PT}^{-1} \theta \right\},
\end{equation}
	has the following property,
	\begin{equation}
	\label{eq:covariance}
		A \Sigma_{PT} + \Sigma_{PT} A = \frac{\eta}{|S|} C.
	\end{equation}
\end{lemma}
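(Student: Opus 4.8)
The plan is to use the local quadratic assumption (eq.~\ref{eq:local_property}) to linearize the drift, which turns eq.~(\ref{eq:OU_process}) into a genuine Ornstein-Uhlenbeck process whose stationary density is Gaussian, and then to pin down its covariance by substituting the Gaussian ansatz into the stationary Fokker-Planck equation. First I would compute the drift: under $\mathcal R(\theta) = \frac12\theta^\top A\theta$ the full gradient is $g(\theta) = \nabla_\theta\mathcal R(\theta) = A\theta$, so eq.~(\ref{eq:OU_process}) becomes the linear recursion $\Delta\theta = -\eta A\theta + \frac{\eta}{\sqrt{|S|}}B\,\Delta W$, whose continuum limit is the Ornstein-Uhlenbeck SDE $\mathrm d\theta = -\eta A\theta\,\mathrm dt + \frac{\eta}{\sqrt{|S|}}B\,\mathrm dW$. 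Reading off the drift matrix $\eta A$ and the diffusion matrix $D = \frac{\eta^2}{|S|}BB^\top = \frac{\eta^2}{|S|}C$ sets up everything downstream.

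Second, I would write the Fokker-Planck equation $\partial_t p = \nabla\!\cdot(\eta A\theta\,p) + \tfrac12\sum_{ij}D_{ij}\partial_i\partial_j p$ and impose stationarity $\partial_t p = 0$. Substituting the ansatz $q(\theta) = M\exp\{-\frac12\theta^\top\Sigma_{PT}^{-1}\theta\}$, together with $\nabla q = -\Sigma_{PT}^{-1}\theta\,q$ and $\partial_i\partial_j q = [-(\Sigma_{PT}^{-1})_{ij} + (\Sigma_{PT}^{-1}\theta)_i(\Sigma_{PT}^{-1}\theta)_j]\,q$, the stationarity condition splits, after dividing by $q$, into a $\theta$-independent part and a part quadratic in $\theta$. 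Matching the quadratic part (after symmetrizing, since $\theta^\top M\theta$ only sees the symmetric part of $M$) gives $\eta(A\Sigma_{PT}^{-1} + \Sigma_{PT}^{-1}A) = \Sigma_{PT}^{-1}D\Sigma_{PT}^{-1}$; conjugating both sides by $\Sigma_{PT}$ and substituting $D = \frac{\eta^2}{|S|}C$ yields exactly $A\Sigma_{PT} + \Sigma_{PT}A = \frac{\eta}{|S|}C$, which is eq.~(\ref{eq:covariance}). The constant part then reduces to $\eta\,\text{tr}(A) = \tfrac12\text{tr}(D\Sigma_{PT}^{-1})$, which is simply the trace of the quadratic identity and serves as a consistency check rather than a new constraint. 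An equivalent and slightly shorter route is the moment method: apply It\^o's formula to $\theta\theta^\top$, take expectations so the drift-noise cross term drops, and set $\tfrac{\mathrm d}{\mathrm dt}\mathbb E[\theta\theta^\top] = 0$, which returns the same Lyapunov equation directly.

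The main obstacle is not the algebra but the two analytic gaps that the linear-SDE machinery quietly relies on. The first is the passage from the discrete SGD recursion to the continuous SDE: the discrete Ornstein-Uhlenbeck chain has a stationary covariance that agrees with the continuous Lyapunov solution only to leading order in $\eta$, so I would either work in the small learning-rate regime or state explicitly that the SDE is taken as the model of the dynamics. The second is existence and positive-definiteness of $\Sigma_{PT}$: the Lyapunov equation $A\Sigma + \Sigma A = \frac{\eta}{|S|}C$ has a unique positive-definite solution precisely when the drift is stable, i.e.\ when $A$ is strictly positive definite so that $-\eta A$ is Hurwitz; the (semi) positive-definite hypothesis on $A$ is therefore slightly too weak, and I would either strengthen it to strict positive-definiteness or invoke the paper's blanket assumption that the stationary solution exists and is unique. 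Granting those two points, the derivation is routine, and the Gaussian form of $q$ together with eq.~(\ref{eq:covariance}) follows.
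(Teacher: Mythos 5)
Your proof is correct, but it takes a genuinely different route from the paper's. The paper quotes the explicit stochastic-convolution solution of the Ornstein--Uhlenbeck process, $\theta(t) = \theta(0)e^{-At} + \sqrt{\eta/|S|}\int_0^t e^{-A(t-t')}B\,\mathrm{d}W(t')$, identifies the stationary covariance $\Sigma_{PT} = \mathbb E[\theta\theta^{\top}]$ with the integral $\frac{\eta}{|S|}\int_{-\infty}^{t} e^{-A(t-t')}Ce^{-A(t-t')}\,\mathrm{d}t'$, and then verifies the Lyapunov equation by recognizing $Ae^{-As}Ce^{-As}+e^{-As}Ce^{-As}A$ as a total derivative and applying the fundamental theorem of calculus. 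You instead substitute the Gaussian ansatz into the stationary Fokker--Planck equation and match coefficients; your algebra checks out (the quadratic part yields the Lyapunov equation after left- and right-multiplication by $\Sigma_{PT}$, and the constant part is its trace, hence redundant), and your alternative moment-method route via $\frac{\mathrm{d}}{\mathrm{d}t}\mathbb E[\theta\theta^{\top}]=0$ is closest in spirit to what the paper actually does, since both compute the second moment. What each buys: your Fokker--Planck substitution simultaneously \emph{verifies} that the Gaussian form is stationary rather than taking it as given, while the paper's explicit-solution route exhibits $\Sigma_{PT}$ as a concrete integral. Your two caveats are well taken and are more careful than the source: the paper disposes of both via its blanket assumption (end of Section \ref{sec:preliminaries}) that stationary solutions of all SDEs involved exist and are unique, and it is indeed loose about strict versus semi-positive-definiteness of $A$, as well as about $BB^{\top}$ versus $B^{\top}B=C$ --- a distinction you also elide but which is immaterial once $B$ is taken symmetric.
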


This lemma gives the analytic form of the steady distribution of the Ornstein-Uhlenbeck process. This lemma is from \citet{mandt2017stochastic}. Here, we recall the proof to make this paper complete.

\begin{proof}
\label{}
	Form a result in Ornstein-Uhlenbeck process \citep{gardiner1985handbook}, we know that the parameter $\theta$ has the following analytic solution,
	\begin{equation}
		\theta(t) = \theta(0)e^{-At} + \sqrt{\frac{\eta}{|S|}} \int_{0}^{t} e^{-A(t - t')} B \text{d} W(t'),
	\end{equation}
	where $W(t')$ is a white noise and follows $\mathcal N(0, I)$.
	From eq. (\ref{eq:stationary_distribution}), we know that
	\begin{equation}
		\Sigma_{PT} = \mathbb E_{\theta \sim Q} \left[\theta \theta^{\top}\right].
	\end{equation}
	Therefore, we have the following equation,
	\begin{align}
		A \Sigma_{PT} + \Sigma_{PT} A = & \frac{\eta}{|S|} \int_{-\infty}^{t} A e^{-A(t - t_{0})} C e^{-A(t - t_{0})} \text{d} t' \nonumber\\
		& + \frac{\eta}{|S|} \int_{-\infty}^{t} e^{-A(t - t_{0})} C e^{-A(t - t_{0})} \text{d} t' A \nonumber\\
		= & \frac{\eta}{|S|} \int_{-\infty}^{t} \frac{\text{d}}{\text{d}t'} A e^{-A(t - t_{0})} C e^{-A(t - t_{0})} \nonumber\\
		= & \frac{\eta}{|S|} C.
	\end{align}
	The proof is completed.
\end{proof}
	
Then, we can prove Theorem \ref{thm:PAC-Bayesian_SGD-PT}. This proof is inspired by \citet{he2019control}. Here, we recall the proof to make this paper complete.

\begin{proof}[Proof of Theorem \ref{thm:PAC-Bayesian_SGD-PT}]
\label{proof:PAC-Bayesian_SGD}
	In PAC-Bayesian framework (Lemma \ref{lemma:PAC-Bayesian}), an essential part is the KL divergence between the distribution of the learned hypothesis and the priori on the hypothesis space. The prior distribution can be interpreted as the distribution of the initial parameters, which are usually settled according to Gaussian distributions or uniform distributions.\footnote{Usually, when there is no confident prior knowledge of the latent model parameters, the priori should be set as distributions with no information, such as Gaussian distributions or uniform distributions. This setting comes from two considerations: (1) Once the algorithms based on the Bayesian statistics can converge, after long enough time and with big enough data, the algorithms can always converge to the stationary distributions. This is guaranteed by the assumption that the stationary solution of the latent stochastic differential equation exists and is unique; (2) Setting priori should be very careful, as we can not assume we have any knowledge of the target hypothesis function before we have started training the model.} 
Here, we use a standard Gaussian distribution $\mathcal N(0, I)$ as the priori.
Suppose the densities of the stationary distribution $Q_{PT}$ and the prior distribution $P$ are respectively $p(\theta)$ and $q_{PT}(\theta)$ in terms of the parameter $\theta$ as the following equations,
	\begin{gather}
	\label{eq:density_prior}
		p(\theta) = \frac{1}{\sqrt{2\pi\det(I)}} \exp \left\{ -\frac{1}{2} \theta^{\top} I \theta \right\},\\
	\label{eq:density_stationary}
		q_{PT}(\theta) = \frac{1}{\sqrt{2\pi\det(\Sigma_{PT})}} \exp \left\{ -\frac{1}{2} \theta^{\top} \Sigma_{PT}^{-1} \theta \right\},
	\end{gather}
where ep. (\ref{eq:density_stationary}) comes from eq. (\ref{eq:stationary_distribution}) by calculating the normalizer $M$.

Therefore,
	\begin{align}
	\label{eq:prob_ratio}
		& \log \left( \frac{q_{PT}(\theta)}{p(\theta)} \right) \nonumber\\
		= & \log \left( \frac{\sqrt{2\pi\det(I)}}{\sqrt{2\pi\det(\Sigma_{PT})}} \exp \left\{ \frac{1}{2} \theta^{\top} I \theta -\frac{1}{2} \theta^{\top} \Sigma_{PT}^{-1} \theta \right\} \right) \nonumber\\
		= & \frac{1}{2} \log \left( \frac{1}{\det(\Sigma_{PT})} \right) + \frac{1}{2} \left( \theta^{\top} I \theta - \theta^{\top} \Sigma_{PT}^{-1} \theta \right).
	\end{align}
	
Applying eq. (\ref{eq:prob_ratio}) 
to eq. (\ref{eq:def_KL_divergence}), we can calculate the KL divergence between the distributions $Q_{PT}$ and $P$ (we assume $\Theta = \mathbb R^{d}$):
	\begin{align}
	\label{eq:proof:KL}
		& \mathcal D(Q_{PT} || P) \nonumber\\
		= & \mathbb E_{\theta \sim Q_{PT}} \left( \log \frac{Q_{PT}(\theta)}{P(\theta)} \right) \nonumber\\
		= & \int_{\theta \in \Theta} \log \left( \frac{q_{PT}(\theta)}{p(\theta)} \right) q_{PT}(\theta) \text{d} \theta \nonumber\\
		= & \int_{\theta \in \Theta} \left[ \frac{1}{2} \log \left( \frac{1}{\det(\Sigma_{PT})} \right) + \frac{1}{2} \left( \theta^{\top} I \theta - \theta^{\top} \Sigma_{PT}^{-1} \theta \right) \right] q(\theta) \text{d} \theta \nonumber\\
		= & \frac{1}{2} \log \left( \frac{1}{\det(\Sigma_{PT})} \right) + \frac{1}{2} \int_{\theta \in \Theta} \theta^{\top} I \theta p(\theta) \text{d} \theta - \frac{1}{2} \int_{\mathbb R^{|S|}} \theta^{\top} \Sigma_{PT}^{-1} \theta q(\theta) \text{d} \theta \nonumber\\
		= & \frac{1}{2} \log \left( \frac{1}{\det(\Sigma_{PT})} \right) + \frac{1}{2} \mathbb E_{\theta \sim \mathcal N(0, \Sigma_{PT})} \theta^{\top} I \theta - \frac{1}{2} \mathbb E_{\theta \sim \mathcal N(0, \Sigma_{PT})} \theta^{\top} \Sigma_{PT}^{-1} \theta\nonumber\\
		= & \frac{1}{2} \log \left( \frac{1}{\det(\Sigma_{PT})} \right) + \frac{1}{2} \text{tr}(\Sigma_{PT} - I). 
	\end{align}
	
	From eq. (\ref{eq:covariance}), we have that
	\begin{gather}
		A_{PT} \Sigma_{PT} + \Sigma_{PT} A_{PT} = \frac{\eta_{PT}}{|S_{PT}|} C.
	\end{gather}
	Therefore,
	\begin{gather}
		A_{PT} \Sigma_{PT} A_{PT}^{-1} + \Sigma_{PT} = \frac{\eta_{PT}}{|S_{PT}|} C A_{PT}^{-1}.
	\end{gather}
	After calculating the trace of the both sides, we have the following equation,
	\begin{gather}
		\text{tr}\left( A_{PT} \Sigma_{PT} A_{PT}^{-1} + \Sigma_{PT} \right) = \text{tr}\left( \frac{\eta_{PT}}{|S_{PT}|} C A_{PT}^{-1} \right).
	\end{gather}
	The left-hand side (LHS) is as follows,
	\begin{align}
	\text{LHS} = & \text{tr}\left( A_{PT} \Sigma_{PT} A_{PT}^{-1} + \Sigma_{PT} \right) \nonumber\\
	= & \text{tr}\left( A_{PT} \Sigma_{PT} A_{PT}^{-1} \right) + \text{tr}\left( \Sigma_{PT} \right) \nonumber\\
	= & \text{tr}\left( \Sigma_{PT} A_{PT}^{-1} A_{PT} \right) + \text{tr}\left( \Sigma_{PT} \right) \nonumber\\
	= & \text{tr}\left( \Sigma_{PT} \right) + \text{tr}\left( \Sigma_{PT} \right) \nonumber\\
	= & 2 \text{tr}\left( \Sigma_{PT} \right).
	\end{align}
	Therefore,
	\begin{align}
	\label{eq:trace_covariance}
	& \text{tr}\left( \Sigma_{PT} \right) 
	=  \frac{1}{2} \text{tr}\left( \frac{\eta_{PT}}{|S_{PT}|} C A_{PT}^{-1} \right) 
	= \frac{1}{2} \frac{\eta_{PT}}{|S_{PT}|} \text{tr}\left( C A_{PT}^{-1} \right).
	\end{align}

	At the same time, we can easily calculate that
	\begin{equation}
	\label{eq:trace}
		\text{tr}(I) = d,
	\end{equation}
	as $I \in \mathbb R^{d \times d}$, where $d$ is the dimension of the parameter $\theta$.
	
	Insert eqs. (\ref{eq:trace_covariance}) and (\ref{eq:trace}) to eq. (\ref{eq:proof:KL}), we can get the following inequality,
	\begin{align}
	\label{eq:KL_divergence_SGD}
		& \mathcal D(Q_{PT} || P) 
		\le \frac{1}{4} \frac{\eta_{PT}}{|S_{PT}|} tr(C A_{PT}^{-1}) - \frac{1}{2} \log (\det (\Sigma_{PT})) - \frac{1}{2}d. 
	\end{align}
	
	Eq. (\ref{eq:KL_divergence_SGD}) gives an upper bound for the distance (measured by KL divergence) between the stationary distribution of the output weights by SGD and the priori on the hypothesis space. Considering the monotonicity of the generalization bound in terms of the KL divergence, we can further obtain a PAC-Bayesian generalization bound for SGD by inserting the KL divergence bound (eq. \ref{eq:KL_divergence_SGD}) into the PAC-Bayesian framework (eq. (\ref{eq:PAC-Bayesian}) of Lemma \ref{lemma:PAC-Bayesian}).
	
	
	
	
	The proof is completed.
\end{proof}

\subsection{Proof of Theorem \ref{thm:PAC-Bayesian_SGD-FT}}
\label{sec:proof_PAC-Bayesian_SGD}
	
This section proves Theorem \ref{thm:PAC-Bayesian_SGD-FT}. The proof is similar to the previous theorem.

\begin{proof}[Proof of Theorem \ref{thm:PAC-Bayesian_SGD-FT}]
\label{proof:PAC-Bayesian_SGD}

Similarly, the distribution $Q_{FT}$ of the learned hypothesis and the prior distributions $Q_{PT}$ are respectively $q_{PT}(\theta)$ and $p_{FT}(\theta)$ in terms of the parameter $\theta$ as the following equations,
	\begin{gather}
	\label{eq:density_prior_FT}
		q_{PT}(\theta) = \frac{1}{\sqrt{2\pi\det(\Sigma_{PT})}} \exp \left\{ -\frac{1}{2} \theta^{\top} \Sigma_{PT}^{-1} \theta \right\},\\
	\label{eq:density_stationary_FT}
		q_{FT}(\theta) = \frac{1}{\sqrt{2\pi\det(\Sigma_{FT})}} \exp \left\{ -\frac{1}{2} \theta^{\top} \Sigma_{FT}^{-1} \theta \right\},
	\end{gather}
where ep. (\ref{eq:density_stationary_FT}) comes from calculating the normalizer $M$.

Therefore,
	\begin{align}
	\label{eq:prob_ratio}
		& \log \left( \frac{q_{FT}(\theta)}{q_{PT}(\theta)} \right) \nonumber\\
		= & \log \left( \frac{\sqrt{2\pi\det(\Sigma_{PT})}}{\sqrt{2\pi\det(\Sigma_{FT})}} \exp \left\{\frac{1}{2} \theta^{\top} \Sigma_{PT}^{-1} \theta - \frac{1}{2} \theta^{\top} \Sigma_{FT}^{-1} \theta \right\} \right) \nonumber\\
		= & \frac{1}{2} \log \left( \frac{\det(\Sigma_{PT})}{\det(\Sigma_{FT})} \right) + \frac{1}{2} \left(\theta^{\top} \Sigma_{PT}^{-1} \theta - \theta^{\top} \Sigma_{FT}^{-1} \theta \right).
	\end{align}
	
	Then, the KL divergence between the distributions $Q_{FT}$ and $Q_{PT}$ are as follows (we assume $\Theta = \mathbb R^{d}$):
	\begin{align}
	\label{eq:proof:KL}
		& \mathcal D(Q_{FT} || Q_{PT}) \nonumber\\
		= & \mathbb E_{\theta \sim Q_{FT}} \left( \log \frac{Q_{FT}(\theta)}{Q_{PT}(\theta)} \right) \nonumber\\
		= & \int_{\theta \in \Theta} \log \left( \frac{q_{FT}(\theta)}{q_{PT}(\theta)} \right) q_{FT}(\theta) \text{d} \theta \nonumber\\
		= & \int_{\theta \in \Theta} \left[ \frac{1}{2} \log \left( \frac{\det(\Sigma_{PT})}{\det(\Sigma_{FT})} \right) + \frac{1}{2} \left(\theta^{\top} \Sigma_{PT}^{-1} \theta - \theta^{\top} \Sigma_{FT}^{-1} \theta \right) \right] q(\theta) \text{d} \theta \nonumber\\
		= & \frac{1}{2} \log \left( \frac{\det(\Sigma_{PT})}{\det(\Sigma_{FT})} \right) + \frac{1}{2} \int_{\mathbb R^{|S|}} \theta^{\top} \Sigma_{PT}^{-1} \theta q_{FT}(\theta) \text{d} \theta \nonumber\\
		& - \frac{1}{2} \int_{\mathbb R^{|S|}} \theta^{\top} \Sigma_{FT}^{-1} \theta q_{FT}(\theta) \text{d} \theta \nonumber\\
		= & \frac{1}{2} \log \left( \frac{\det(\Sigma_{PT})}{\det(\Sigma_{FT})} \right) + \frac{1}{2} \mathbb E_{\theta \sim \mathcal N(0, \Sigma_{PT})} \theta^{\top} \Sigma_{PT}^{-1} \theta - \frac{1}{2} \mathbb E_{\theta \sim \mathcal N(0, \Sigma_{FT})} \theta^{\top} \Sigma_{FT}^{-1} \theta \nonumber\\
		= &\frac{1}{2} \log \left({\det(\Sigma_{PT}^{-1}\Sigma_{FT})} \right) + \frac{1}{2} \text{tr}(\Sigma_{PT}^{-1}\Sigma_{FT} - I) + \frac{1}{2} \theta_{FT}^\top \Sigma_{PT}^{-1}\theta_{FT}.
	\end{align}

Therefore, we have
	\begin{align}
	\label{eq:thm:PAC-Bayesian_SGD-FT}
		& R(Q_{FT}) \le \hat R(Q_{FT})
		+ \sqrt{\frac{D(Q_{FT}, Q_{PT}) + 2 \log \left( \frac{1}{\delta} \right) + 2 \log N_{FT} + 4}{4N_{FT} - 2}},
	\end{align}
	where
\begin{align*}
& D(Q_{FT}, Q_{PT})
= \log \left({\det(\Sigma_{PT}^{-1}\Sigma_{FT})} \right) + \text{tr}(\Sigma_{PT}^{-1}\Sigma_{FT} - I) + \theta_{FT}^\top \Sigma_{PT}^{-1}\theta_{FT},
\end{align*}
		
	The proof is completed.
\end{proof}


\section*{Acknowledgments}

The authors appreciate Shiye Lei for helpful discussions.

\bibliographystyle{plainnat}
\bibliography{bib}
\end{document}